\newtheorem{theorem}{Theorem}[section]
\newtheorem{lemma}[theorem]{Lemma}
\begin{document}

%

%

\twocolumn[

\aistatstitle{Benefits from Superposed Hawkes Processes}

\aistatsauthor{ Hongteng Xu$^{1,2}$ \And Dixin Luo$^1$ \And  Xu Chen$^3$ \And Lawrence Carin$^1$}

\aistatsaddress{ $^1$Department of ECE, Duke University \And  $^2$InfiniaML, Inc. \And $^3$School of Software, Tsinghua University} 

]

\begin{abstract}
The superposition of temporal point processes has been studied for many years, although the usefulness of such models for practical applications has not be fully developed. 
We investigate superposed Hawkes process as an important class of such models, with properties studied in the framework of least squares estimation. 
The superposition of Hawkes processes is demonstrated to be beneficial for tightening the upper bound of excess risk under certain conditions, and we show the feasibility of the benefit in typical situations. 
The usefulness of superposed Hawkes processes is verified on synthetic data, and its potential to solve the cold-start problem of recommendation systems is demonstrated on real-world data.
\end{abstract}

\section{Introduction}
Given a set of temporal point processes $\{N^m\}_{m=1}^{M}$, their superposition is a new point process $N$ defined by the sum of counting processes, $i.e.$, $N(t) = \sum_{m=1}^{M}N^m(t)$, $t\geq 0$. 
For the superposed point process, its instantiated event sequence is the superposition of the event sequences corresponding to $\{N^m\}_{m=1}^{M}$. 
The study of superposed point process has a long history, and many interesting properties have been found~\citep{cox1954superposition,cinlar1968superposition,albin1984approximating}. 
However, there exists a marked gap between the study of superposed point processes and practical applications. 
Existing work mainly focuses on the superposition of simple point processes, $e.g.$, Poisson processes~\citep{cinlar1968superposition} and renewal processes~\citep{cox1954superposition}. 
These models are oversimplified to describe the mechanism of real-world event sequences, while the properties of the superposition of more complicated point processes are not fully investigated. 
More essentially, \emph{can we get any benefits from learning superposed point processes?} \emph{If we can, what would the benefits be in practice?}
These are still significant open problems.

Focusing on an important class of point process models, called the Hawkes process~\citep{hawkes1971point}, we give positive answers for the above questions. 
In particular, we prove that there indeed exist benefits from superposed Hawkes processes in the framework of least-squares learning. 
Theoretically, for the Hawkes processes with different \emph{exogenous} intensities and shared \emph{endogenous} triggering patterns, we can learn the parameters of the endogenous terms with a tighter bound on excess risk, by superposing the Hawkes processes together. 
We analyze this superposition-based learning in depth and quantitatively connect its feasibility with the diversity of exogenous intensity. 
Moreover, we validate the benefits from superposed Hawkes processes on both synthetic and real-world data. 
We show that learning superposed Hawkes processes is beneficial to solve the cold-start problem of recommendation systems.

\section{Related work}
\subsection{Superposed point processes}
As aforementioned, the superposition of temporal point processes has been studied for decades. 
The early work in~\citep{cox1954superposition} studied the superposition of renewal processes and applied its property to analyze pooling signals in neurophysiology. 
The work in~\citep{cinlar1968superposition} analyzed the independence of source processes and the dynamics of the source indicator, for the superposition of Poisson processes and that of renewal processes. 
This work is further extended to multi-dimensional point processes in~\citep{ccinlar1968superposition}. 
Recently, the work in~\citep{moller2012transforming} proved that a spatial point process can be transformed to a Poisson process by superposing its observations randomly. 
From the viewpoint of applications, the superposition of arrival processes is applied to model queue behaviors, which can be learned as a renewal process~\citep{albin1984approximating}. 
Additionally, the superposition of arrival processes is used to analyze voice data in~\citep{sriram1986characterizing}. 
More recently, Bayesian-based methods are proposed for classifying source processes from superposed observations~\citep{walsh2005classification}, and their learning algorithms can be implemented based on MCMC~\citep{redenbach2015classification} or variational inference~\citep{rajala2016variational}.
However, all of these research fruits are based on simple point processes, like Poisson and renewal processes. 
The superposition of more complicated point processes, $e.g.$, the superposition of (multi-dimensional) Hawkes processes, has not been investigated yet. 
What is worse, the previous work above always treats superposed point processes as ``challenges'' in statistical analysis and practical applications. 
None of them consider potential ``benefits'' from superposed point processes. 
Our work fills this gap from the viewpoint of learning Hawkes processes.

\subsection{Hawkes processes}
Hawkes processes~\citep{hawkes1971point,hawkes1974cluster} are useful tools for modeling and analyzing the mutual-excitation phenomena commonly observed in real-world event sequences, which can be applied to many problems, such as social network analysis~\citep{zhao2015seismic,wang2017linking} and quantitative finance~\citep{bacry2012non,hardiman2013critical}. 
Many variants of Hawkes processes have been proposed recently, $e.g.$, the mixture of Hawkes processes~\citep{yang2013mixture,xu2017dirichlet}, the nonlinear isotonic Hawkes process~\citep{wang2016isotonic} and the time-varying Hawkes process~\citep{xu17b}, which show potential to analyze complicated event sequences. 
From the perspective of learning methodology, maximum likelihood estimation is one of the most popular approaches for learning Hawkes processes~\citep{lewis2011nonparametric,zhou2013learning}. 
Recently, least-squares-based learning methods~\citep{eichler2017graphical}, Wiener-Hopf-based methods~\citep{bacry2012non} and the cumulants-based methods~\citep{achab2016uncovering} are also used to learn and analyze Hawkes processes. 
However, these methods do not consider the influence of superposition on learning.

\section{Superposed Hawkes Processes}
\subsection{Learning Hawkes processes as linear predictors}\label{ssec2:ls}
Consider $D$ entities with interactions ($e.g.$, $D$ users in a social network). 
For each entity $d\in\mathcal{D}$, $\mathcal{D}=\{1,...,D\}$, we observe its behaviors at timestamps $\{t_{d,1},t_{d,2},...\}$, which are represented by a counting process $N_d(t)=|\{t_{d,i}|t_{d,i}\leq t,~i=1,2,...\}|$, $i.e.$, the number of type-$d$ events before and at time $t$. 
Here $|\cdot|$ represent the cardinality of a set, \textcolor{black}{and $i$ indicates the index of event in each observed sequence.} 
Accordingly, the sequence of all entities' behaviors, denoted as $\{(t_i, d_i)\}$, can be represented by a $D$-dimensional counting process $N(t)=[N_d(t)]\in\mathbb{N}^D$.
For each $N_d(t)$, the expected instantaneous happening rate of an event is represented by its intensity function:
\begin{eqnarray}\label{intensity}
\begin{aligned}
\lambda_d(t) = \frac{\mathbb{E}[dN_d(t)|\mathcal{H}_t]}{dt},
\end{aligned}
\end{eqnarray}
where $\mathcal{H}_t$ contains all historical events happening before or at time $t$. 

The $D$-dimensional counting process may be modeled by a $D$-dimensional Hawkes process, and the intensity function has the form:
\begin{eqnarray}\label{hp}
\begin{aligned}
\lambda_d(t) &= \mu_d(t) + \sideset{}{_{d'=1}^{D}}\sum\int_{0}^{T}\phi_{dd'}(t,~s)dN_{d'}(s)\\
&=\mu_d(t) + \sideset{}{_{(t_i,d_i)\in\mathcal{H}_t}}\sum\phi_{dd_i}(t,~t_i).
\end{aligned}
\end{eqnarray}
where $\bm{\mu}(t)=[\mu_d(t)]$ corresponds to the background intensity caused by some exogenous factors. 
Generally, we can model $\bm{\mu}(t)$ as a $D$-dimensional homogeneous or inhomogeneous Poisson process. 
The term $\sum_{d'=1}^{D}\int_{0}^{T}\phi_{dd'}(t,s)dN_{d'}(s)$ represents the accumulation of endogenous intensity caused by history~\citep{farajtabar2014shaping}. 
The impact function (or link function) $\phi_{dd'}(t,s)$, $t\geq s$, represents the influence of the $d'$-th entity on the $d$-th entity when their corresponding behaviors (or events) happen at time $s$ and $t$, respectively. 
We often assume that the target Hawkes process is shift-invariant: $\phi_{dd'}(t,s)=\phi_{dd'}(t-s)$. 
For convenience, we represent a Hawkes process as $HP(\bm{\mu},\bm{\Phi})$, where $\bm{\Phi}(t)=[\phi_{dd'}(t)]$, and its instantiated counting process is $N(t)\sim HP(\bm{\mu},\bm{\Phi})$. 

We may often model $\bm{\mu}(t)$ as a $D$-dimensional vector $\bm{\mu}=[\mu_d]$, and $\phi_{dd'}(t)$ as $a_{dd'}\kappa(t)$~\citep{zhou2013learning}, where $\kappa(t)$ is a predefined decay kernel like exponential kernel, $i.e.$, $\kappa(t)=\exp(-wt)$. 
This model means that the exogenous intensity is a $D$-dimensional homogeneous Poisson process, while the impact functions $\bm{\Phi}(t)$ can be parameterized by an infectivity matrix $\bm{A}=[a_{dd'}]$. 
In such a situation, the Hawkes process corresponds to a parametric model with $\bm{\theta}=[\bm{\mu}; \mbox{vec}(\bm{A})]\in\mathbb{R}^{D(1+D)}$, where $\mbox{vec}(\cdot)$ vectorizes its input. 
Accordingly, its intensity function can be represented as a linear function of $\bm{\theta}$:
\begin{eqnarray}
\begin{aligned}
\lambda_d(t) = \bm{x}_d^{\top}(t)\bm{\theta},
\end{aligned}
\end{eqnarray}
where $\bm{x}_d(t)=[\bm{e}_{d};\mbox{vec}(\bm{E}(t))]$.
$\bm{e}_{d}\in\mathbb{R}^{D}$, whose elements are zeros except the $d$-th one, which has value $1$, corresponds to $\mu_d$. 
$\bm{E}(t)=[e_{dd'}(t)]\in\mathbb{R}^{D\times D}$, where $e_{dd'}(t)=\sum_{(t_i, d_i)\in\mathcal{H}_t,~d_i=d'}\kappa(t-t_i)$, which corresponds to the accumulated decay kernels caused by historical events.

To learn the parameters of the model, we may minimize the squared loss between the counting processes of instantiated event sequences and the integration of intensity function, as in~\citep{eichler2017graphical,wang2016isotonic}. 
Specifically, given $M$ event sequences with $I$ events per each, the squared loss is
\begin{eqnarray}\label{sqloss}
\begin{aligned}
&\mathbb{E}\Bigl[\Bigl(N(t) - \int_{0}^{t}\lambda(s)ds\Bigr)^{2}\Bigr]\\
&=\frac{1}{MI}\sideset{}{_{m=1}^{M}}\sum\sideset{}{_{i=1}^{I}}\sum\Bigl|N^m_{d_i^m}(t_i^m) - \int_{0}^{t_i^m}\lambda_{d_i^m}(s)ds\Bigr|^2\\
&=\|\bm{N}-\bm{X}\bm{\theta}\|_2^2.
\end{aligned}
\end{eqnarray} 
Here, $\bm{N}=\frac{1}{\sqrt{MI}}[\bm{N}^1; ...;\bm{N}^M]\in\mathbb{R}^{MI}$ represents all observed counting processes, in which each $\bm{N}^m=[N^m_{d_1^m}(t_1^m);...;N^m_{d_I^m}(t_I^m)]$ contains the number of events with specific types till observed times stamps. 
Similarly, $\bm{X}=\frac{1}{\sqrt{MI}}[\bm{X}^1;...;\bm{X}^M]\in\mathbb{R}^{MI\times D(1+D)}$, and each $\bm{X}^m=[\int_{0}^{t_1^m}\bm{x}_{d_1^m}^{\top}(s)ds;...;\int_{0}^{t_I^m}\bm{x}_{d_I^m}^{\top}(s)ds]\in \mathbb{R}^{I\times D(1+D)}$, where the integration is an element-wise operation. 

\textcolor{black}{
It should be noted that in practice it is unnecessary for each sequence to have the same number of events. 
However, without loss of generality, we assume each event sequence has $I$ events in the following theoretical content for convenience.
} 

From the viewpoint of machine learning, (\ref{sqloss}) measures the risk that the observed counting processes are different from their expectations estimated by a linear predictor, where $\bm{N}$ and $\bm{X}$ are labels and samples, respectively. 
An ideal predictor can obtain the real expectation, such that (\ref{sqloss}) corresponds to the variance of counting processes. 
From the analysis in~\citep{bacry2012non,hardiman2013critical}, the variance $\mathbb{E}[(N(t)-\int_0^t\lambda(s)ds)^2]\sim \mathcal{O}(t^2)$. 
Therefore, differing from a traditional least squares-based linear predictor, the residual errors between sample-label pair ($i.e.$, $N_{d_i^m}(t_i^m) - (\int_{0}^{t_i^m}\bm{x}_{d_i^m}^{\top}(s)ds)\bm{\theta}$) at different time stamps do not obey the same Gaussian distribution. 
To solve this problem, we rescale the labels and samples according to the property of the variance mentioned above, and obtain a weighted squared loss (the risk of proposed linear predictor) as:
\begin{eqnarray}\label{sqloss_s}
\begin{aligned}
R_{single}(\bm{\theta})=\|\bm{W}(\bm{N}-\bm{X}\bm{\theta})\|_2^2,
\end{aligned}
\end{eqnarray}
where the diagonal matrix $\bm{W}=\mbox{diag}(\bm{W}^1,...,\bm{W}^M)$ and each $\bm{W}^m=\mbox{diag}(\frac{1}{t_1^m},...\frac{1}{t_I^m})$. 
Here we denote the loss as $R_{single}$ because it corresponds to learning a single Hawkes process from $M$ observations.

In many practical situations, the systems described by Hawkes processes are endogenously stationary, and their fluctuations are caused by the changes of exogenous intensities. 
For example, the interactions between users ($i.e.$, entities) in social networks can be modeled by Hawkes processes. 
In practice, the event sequences we observed may share the same endogenous triggering patterns, while having different exogenous intensities, because the infectivity among different users ($i.e.$, the impact functions $\bm{\Phi}(t)$ or their infectivity matrix $\bm{A}$) is stationary in a long time range~\citep{zhou2013learning}, but these sequences of users' behaviors may be driven by different information sources and contents~\citep{farajtabar2014shaping}. 

In the case of multiple sources, we require $M$ Hawkes processes to model the $M$ event sequences. 
The processes have individual $\{\bm{\mu}^m\}_{m=1}^{M}$ and shared infectivity $\bm{A}$. 
We can still learn these models jointly by solving a least squares problem, in which the parameter $\bm{\theta}_{multi}=[\bm{\mu}^1;...;\bm{\mu}^M;\mbox{vec}(\bm{A})]\in\mathbb{R}^{D(M+D)}$ and the squared loss is 
\begin{eqnarray}\label{sqloss_m}
\begin{aligned}
R_{multi}(\bm{\theta}_{multi})=\|\bm{W}(\bm{N}-\bm{X}_{multi}\bm{\theta}_{multi})\|_2^2,
\end{aligned}
\end{eqnarray}
where $\bm{X}_{multi}=[\bm{X}_{\mu}, \bm{X}_{A}]\in\mathbb{R}^{MI\times D(M+D)}$. 
The last $D^2$ columns of $\bm{X}_{multi}$ ($i.e.$, $\bm{X}_A$) are the same with those of $\bm{X}$ in (\ref{sqloss}), while the first $MD$ columns ($i.e.$, $\bm{X}_{\mu}$) are sparse, which corresponds to different $\bm{\mu}^m$. 
Specifically, for the sample $N_{d_i^m}(t_i^m)$, the $(I(m-1)+i)$-th row of $\bm{X}_{\mu}$ are all zeros except the $(D(m-1)+d_i^m)$-th element, with value $\frac{1}{\sqrt{MI}}$.   

Equation (\ref{sqloss_m}) may be viewed as a special case of multi-task learning of Hawkes processes in~\citep{luo2015multi}, in which Hawkes processes with different exogenous intensities share the same endogenous triggering patterns. 
If the exogenous intensities have certain structures, $e.g.$, the $\bm{\mu}^m$'s are sparse or they are grouped and low-rank, we can further impose some regularizers in (\ref{sqloss_m}). 
However, learning multiple Hawkes processes jointly ($i.e.$, $\min_{\bm{\theta}_{multi}} R_{multi}$) is harder than learning a single one ($i.e.$, $\min_{\bm{\theta}_{single}} R_{single}$), which has more parameters and requires more observations. 
In the following subsection we will show that by superposing the Hawkes processes, we can obtain better learning results with fewer observations, especially for the endogenous impact functions and the corresponding infectivity matrix. 

\subsection{Benefits from superposition} 
For independent Hawkes processes with shared impact functions, their superposition has the following property:
\begin{theorem}\label{thm1}
\textcolor{black}{For $M$ independent Hawkes processes with shared impact functions, where $N^m(t)\sim HP(\bm{\mu}^m(t),\bm{\Phi})$ and $m=1,...,M$, their superposition is still a Hawkes process, $i.e.$, $N(t)=\sum_{m=1}^{M}N^m(t)$ and $N(t)\sim HP(\sum_{m=1}^{M}\bm{\mu}^m(t),\bm{\Phi})$.}
\end{theorem}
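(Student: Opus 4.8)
The plan is to argue entirely at the level of conditional intensities. Since a temporal point process is uniquely determined by its conditional intensity with respect to its internal history, as in~(\ref{intensity}), it suffices to compute the conditional intensity of the superposition $N(t)=\sum_{m=1}^{M}N^m(t)$ relative to the \emph{combined} history $\mathcal{H}_t$ generated jointly by all $M$ source processes, and to check that it takes the Hawkes form~(\ref{hp}) with exogenous intensity $\sum_{m=1}^{M}\bm{\mu}^m(t)$ and the shared impact function $\bm{\Phi}$.

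First I would use additivity of the counting processes together with linearity of conditional expectation. Because $dN_d(t)=\sum_{m=1}^{M}dN_d^m(t)$,
\begin{equation*}
\lambda_d(t)=\frac{\mathbb{E}[dN_d(t)\mid\mathcal{H}_t]}{dt}=\sum_{m=1}^{M}\frac{\mathbb{E}[dN_d^m(t)\mid\mathcal{H}_t]}{dt}.
\end{equation*}
Each source rate $\lambda_d^m(t)=\mu_d^m(t)+\sum_{d'=1}^{D}\int_0^{t}\phi_{dd'}(t-s)\,dN_{d'}^m(s)$ is, by definition, the conditional intensity of $N^m$ with respect to its \emph{own} history $\mathcal{H}_t^m$. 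The essential step is therefore to show that enlarging the conditioning from $\mathcal{H}_t^m$ to the full $\mathcal{H}_t$ leaves this rate unchanged, i.e.\ $\mathbb{E}[dN_d^m(t)\mid\mathcal{H}_t]=\mathbb{E}[dN_d^m(t)\mid\mathcal{H}_t^m]=\lambda_d^m(t)\,dt$.

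The hard part is exactly this reduction, and it is where independence is indispensable. I would justify it through the compensator/martingale characterization of the intensity: $N_d^m(t)-\int_0^{t}\lambda_d^m(s)\,ds$ is a martingale for the filtration $\{\mathcal{H}_t^m\}$, and since the remaining $M-1$ processes are independent of $N^m$, it stays a martingale when the filtration is enlarged by those independent components to $\{\mathcal{H}_t\}$. Hence the $\{\mathcal{H}_t^m\}$-compensator is also the $\{\mathcal{H}_t\}$-compensator, which is precisely the claim that the rate is preserved; equivalently one may invoke the superposition theorem for point processes with stochastic intensities. Absent independence, the events of one source could carry information about another and this preservation would fail, so the hypothesis cannot be dropped.

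Finally I would substitute the source rates and recombine. Swapping the two finite sums and using $\sum_{m=1}^{M}dN_{d'}^m(s)=dN_{d'}(s)$ gives
\begin{equation*}
\begin{aligned}
\lambda_d(t)&=\sum_{m=1}^{M}\mu_d^m(t)+\sum_{d'=1}^{D}\int_0^{t}\phi_{dd'}(t-s)\,d\Bigl(\sum_{m=1}^{M}N_{d'}^m(s)\Bigr)\\
&=\sum_{m=1}^{M}\mu_d^m(t)+\sum_{d'=1}^{D}\int_0^{t}\phi_{dd'}(t-s)\,dN_{d'}(s).
\end{aligned}
\end{equation*}
The shared impact function is what lets $\phi_{dd'}$ be factored out of the sum over $m$, collapsing the $M$ endogenous integrals into a single integral against $dN_{d'}$. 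The right-hand side is the Hawkes intensity~(\ref{hp}) with background $\sum_{m=1}^{M}\bm{\mu}^m(t)$ and impact function $\bm{\Phi}$, so $N(t)\sim HP(\sum_{m=1}^{M}\bm{\mu}^m,\bm{\Phi})$, as claimed.
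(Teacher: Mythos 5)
Your proof is correct and follows essentially the same route as the paper's: additivity of the counting measures, reduction of the conditioning from the combined history to each source's own history via independence, and recombination of the endogenous terms using the shared impact functions. The only difference is that you justify the key conditioning step $\mathbb{E}[dN_d^m(t)\mid\mathcal{H}_t]=\mathbb{E}[dN_d^m(t)\mid\mathcal{H}_t^m]$ explicitly through the martingale/compensator characterization, whereas the paper asserts it without comment; this is a welcome bit of added rigor but not a different argument.
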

\begin{proof}
Because $N(t)=\sum_{m=1}^{M}N^m(t)$, for $d\in\mathcal{D}$, its intensity is
\begin{eqnarray*}
\begin{aligned}
\lambda_d(t)=&\frac{\mathbb{E}[dN_d(t)|\mathcal{H}_t]}{dt}
=\sideset{}{_{m=1}^{M}}\sum\frac{\mathbb{E}[dN_d^m(t)|\cup_{l=1}^{M}\mathcal{H}_t^l]}{dt}\\
=&\sideset{}{_{m=1}^{M}}\sum\frac{\mathbb{E}[dN_d^m(t)|\mathcal{H}_t^m]}{dt}
=\sideset{}{_{m=1}^{M}}\sum\lambda_d^m(t).
\end{aligned}
\end{eqnarray*}
Here $\mathcal{H}_t=\cup_{m=1}^{M}\mathcal{H}_t^{m}$ contains all historical events in the superposed process. 
For the Hawkes processes with shared impact functions, we have
\begin{eqnarray*}
\begin{aligned}
\lambda_d(t) =& \sideset{}{_{m=1}^{M}}\sum\Bigl(\mu_d^m(t) +\sideset{}{_{(t_i^m, d_i^m)\in\mathcal{H}_t^{m}}}\sum\phi_{dd_i^m}(t-t_i^m)\Bigr)\\
=&\sideset{}{_{m=1}^{M}}\sum\mu_d^m(t) + \sideset{}{_{(t_i, d_i)\in\mathcal{H}_t}}\sum\phi_{dd_i}(t-t_i),
\end{aligned}
\end{eqnarray*}
for $d\in\mathcal{D}$. 
According to the definition of Hawkes process in (\ref{hp}), we have $N(t)\sim HP(\sum_{m=1}^{M}\bm{\mu}^m(t),\bm{\Phi})$.
\end{proof}

This property implies that when we aim to learn the impact functions of the target Hawkes processes, besides learning from independent observations, we can learn from the superposed observations of the Hawkes processes. 
In particular, given $\{N^m(t)\}_{m=1}^{M}$, the aforementioned traditional strategy learns multiple Hawkes processes jointly by $\min_{\bm{\theta}_{multi}}R_{multi}(\bm{\theta}_{multi})$. 
The optimal solution $\hat{\bm{\theta}}_{multi}=[\hat{\bm{\mu}}^{1};...;\hat{\bm{\mu}}^{M};\mbox{vec}(\widehat{\bm{A}})]$. 
By contrast, our strategy first obtains a superposed Hawkes process $N_t=\sum_m N^m(t)$, and then learns a single Hawkes process by $\min_{\bm{\theta}_{super}}R_{super}(\bm{\theta}_{super})$, where 
\begin{eqnarray}\label{sqloss_super}
\begin{aligned}
R_{super}(\bm{\theta}_{super})&
=\frac{1}{M^2}R_{single}(\bm{\theta}_{super})\\
&=\Bigl\|\frac{1}{M}\bm{W}(\bm{N}-\bm{X}\bm{\theta}_{super})\Bigr\|_2^2.
\end{aligned}
\end{eqnarray}
The scaling constant $\frac{1}{M}$ in the last term ensures that the dynamic range of the superposed counting process $N(t)$ is approximately the same as that of a single counting process $N^m(t)$, $m=1,...,M$. 
Its optimal solution $\hat{\bm{\theta}}_{super}=[\sum_{m=1}^{M}\hat{\bm{\mu}}^{m};\mbox{vec}(\widehat{\bm{A}})]$. 
Given $\widehat{\bm{A}}$, we can further estimate $\bm{\mu}^m$ by solving $M$ independent least squares problem.

Although our superposition-based strategy cannot learn exogenous intensities simultaneously with endogenous impact functions, it provides benefits for learning impact functions. 
Specifically, given observed samples we can obtain a tighter bound on the excess risk under a certain condition:
\begin{theorem}\label{thm2}
Suppose that we have $M$ independent and stationary $D$-dimensional Hawkes processes with shared impact functions, $i.e.$, $\{HP(\bm{\mu}^m, \bm{A})\}_{m=1}^{M}$, where the parameters are bounded as $\|\bm{\mu}^m\|_2^2\leq B_{\mu}$ and $\|\mbox{vec}(\bm{A})\|_2^2\leq B_{A}$. 
Each of them has an observed event sequence with $I$ events. 
Then the bound on the excess risk $\mathbb{E}[R_{super}(\hat{\bm{\theta}}_{super})-R_{super}(\bm{\theta}_{super}^*)]$ is tighter than that of $\mathbb{E}[R_{multi}(\hat{\bm{\theta}}_{multi})-R_{multi}(\bm{\theta}_{multi}^*)]$ when the upper bound of $\|\sum_{m=1}^{M}\bm{\mu}^m\|_2^2$, denoted as $B_{\Sigma\mu}$, satisfies
\begin{eqnarray}\label{cond}
\begin{aligned}
B_{\Sigma\mu}\leq & MB_{\mu} + D(M+D)B_{\mu}\log\Bigl(1+\frac{MI}{D(M+D)}\Bigr) \\
&- D(1+D)B_{\mu}\log\Bigl(1+\frac{MI}{D(1+D)}\Bigr).
\end{aligned}
\end{eqnarray}
Here $\bm{\theta}^*$ represents the ground truth of parameters.
\end{theorem}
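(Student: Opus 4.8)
The plan is to bound each of the two excess risks by a single generic risk bound for constrained least-squares linear prediction, and then reduce the theorem to an elementary comparison of the two instantiated bounds. Note first that $R_{super}$ and $R_{multi}$ are both reweighted squared losses of a linear predictor $\bm{\theta}\mapsto\bm{X}\bm{\theta}$ over a Euclidean ball, differing only in the parameter dimension and in the norm bound on the ground-truth parameter. So the first step is to record, for a least-squares estimator over $\{\bm{\theta}:\|\bm{\theta}\|_2^2\le B_\theta\}$ with $n$ reweighted observations in $d$ dimensions, an excess-risk bound of the form
\[ \mathbb{E}[R(\hat{\bm{\theta}})-R(\bm{\theta}^*)]\le \frac{C}{n}\Bigl(B_\theta + d\,B_\mu\log\bigl(1+\tfrac{n}{d}\bigr)\Bigr), \]
where $B_\theta$ upper bounds $\|\bm{\theta}^*\|_2^2$ and the scale $B_\mu$ absorbs the common reweighted variance of the counting-process labels, which is identical for both procedures thanks to the $\frac{1}{M}$ scaling in (\ref{sqloss_super}) that matches the dynamic range of $N(t)$ to that of a single $N^m(t)$.

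Establishing this generic bound is the main obstacle. I would derive it either by an online-to-batch conversion of the Vovk--Azoury--Warmuth / online-ridge forecaster, whose cumulative squared-loss regret is exactly of order $d\log(1+n/d)$ plus a regularization term proportional to $\|\bm{\theta}^*\|_2^2$, or by a direct covering-number argument, since the $\log$-metric entropy of a $d$-dimensional ball contributes precisely the factor $d\log(1+n/d)$. The delicate point is that, as noted after (\ref{sqloss}), the residuals do not share a common variance because the counting-process variance grows like $\mathcal{O}(t^2)$; the reweighting matrix $\bm{W}$ is exactly what restores a common scale, so the derivation must be carried out on the rescaled quantities $\bm{W}\bm{N}$ and $\bm{W}\bm{X}$, and the resulting variance scale must be identified with $B_\mu$. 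Getting the complexity coefficient to be the \emph{same} $B_\mu$ in both cases (rather than different scales) is what makes the subsequent comparison collapse cleanly.

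Given the generic bound, the remaining two steps are purely algebraic. First, instantiate it for each procedure: for $R_{multi}$ the dimension is $d=D(M+D)$ and, since $\bm{\theta}_{multi}^*=[\bm{\mu}^1;\dots;\bm{\mu}^M;\mbox{vec}(\bm{A})]$, the norm bound is $B_\theta=\sum_{m}\|\bm{\mu}^m\|_2^2+\|\mbox{vec}(\bm{A})\|_2^2\le MB_\mu+B_A$; for $R_{super}$ the dimension is $d=D(1+D)$ and, since $\bm{\theta}_{super}^*=[\sum_m\bm{\mu}^m;\mbox{vec}(\bm{A})]$, the norm bound is $B_\theta=\|\sum_m\bm{\mu}^m\|_2^2+\|\mbox{vec}(\bm{A})\|_2^2\le B_{\Sigma\mu}+B_A$, with $n=MI$ in both cases. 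Second, I would require the super bound to be no larger than the multi bound; the common factor $C/(MI)$ and the shared term $B_A$ cancel, leaving exactly condition (\ref{cond}). I would close by noting that this condition is satisfiable and meaningful because $f(d)=d\log(1+MI/d)$ is strictly increasing (its derivative equals $\log(1+u)-u/(1+u)>0$ with $u=MI/d$), so the dimension gap $D(M+D)>D(1+D)$ makes the right-hand side of (\ref{cond}) strictly exceed $MB_\mu$, leaving genuine slack for $B_{\Sigma\mu}$; that slack is large precisely when the exogenous intensities are diverse enough that $\|\sum_m\bm{\mu}^m\|_2^2$ is small.
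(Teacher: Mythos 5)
Your proposal is correct and follows essentially the same route as the paper: the paper likewise reduces both excess risks to the constrained least-squares bound $\mathcal{O}\bigl((B_\theta + dY^2\log(1+n/d))/n\bigr)$ (which it simply cites from Shamir, 2015, rather than re-deriving via VAW online-to-batch or covering numbers), replaces $Y^2$ by $\mathcal{O}(B_\mu)$ using the $\bm{W}$-rescaling and the stationarity result $\lim_{t\to\infty}N_d(t)/t=\mu_d/(1-\|\bm{\Phi}\|)$, and then instantiates with $(d,B_\theta)=(D(M+D),\,MB_\mu+B_A)$ versus $(D(1+D),\,B_{\Sigma\mu}+B_A)$ at $n=MI$ to obtain condition (\ref{cond}). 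Your closing monotonicity remark about $d\log(1+MI/d)$ is not in the theorem's proof but matches the argument the paper defers to Lemma~\ref{lm2}.
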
 
\begin{proof}
The heart of the proof is the upper bound on the excess risk of linear predictor derived by~\citep{shamir2015sample}. 
In particular, for the linear predictor $\hat{\bm{\theta}}$ learned by minimizing the squared loss $R(\bm{\theta})=\|\bm{y} - \bm{X\theta}\|_2^2$, where $\bm{\theta}\in \{\bm{\theta}\in\mathbb{R}^C:\|\bm{\theta}\|_2^2\leq B\}$ and the $M$ observations $\bm{y}=[y_1;...;y_M]$ satisfy $y_i\in\{y:|y|\leq Y\}$, we have
\begin{eqnarray}\label{upper}
\begin{aligned}
\mathbb{E}[R(\hat{\bm{\theta}})-R(\bm{\theta}^*)]\leq\mathcal{O}\Bigl(\frac{B+CY^2\log(1+\frac{M}{C})}{M}\Bigr).
\end{aligned}
\end{eqnarray}
For the loss functions in~(\ref{sqloss_m},~\ref{sqloss_super}), the observations $\bm{N}$ is re-scaled by $\frac{1}{M}\bm{W}$ and $\bm{W}$, respectively. 
Additionally, the analysis in~\citep{zhu2013ruin} shows that $\lim_{t\rightarrow \infty}\frac{N_d(t)}{t}=\frac{\mu_d}{1-\|\bm{\Phi}\|}$ for $d=1,...,D$. 
Because of the stationarity of Hawkes processes, we have $1-\|\bm{\Phi}\|\gg 0$, and accordingly, the range of the scaled observations should be in the same order of magnitude with the exogenous intensities. 
Therefore, the $Y^2$ in (\ref{upper}) can be replaced by $\mathcal{O}(B_{\mu})$ in our work. 

According to the analysis above, we apply (\ref{upper}) to the two learning strategies mentioned above and obtain
\begin{eqnarray*}
\begin{aligned}
&\mathbb{E}[R_{multi}(\hat{\bm{\theta}}_{multi})-R_{multi}(\bm{\theta}_{multi}^*)]\\
&\leq \mathcal{O}\Bigl( \frac{B_A+MB_{\mu}+D(M+D)B_{\mu}\log(1+\frac{MI}{D(M+D)})}{MI} \Bigr),\\
&\mathbb{E}[R_{super}(\hat{\bm{\theta}}_{super})-R_{super}(\bm{\theta}_{super}^*)]\\
&\leq \mathcal{O}\Bigl( \frac{B_A+B_{\Sigma\mu}+D(1+D)B_{\mu}\log(1+\frac{MI}{D(1+D)})}{MI} \Bigr).
\end{aligned}
\end{eqnarray*}
Here, both of these two strategies use $MI$ samples in their loss functions. 
However, the dimension of $\bm{\theta}_{multi}$ is $D(M+D)$ for learning multiple Hawkes processes while the dimension of $\bm{\theta}_{super}$ is just $D(1+D)$ for learning a superposed Hawkes process. 
$B_A+MB_{\mu}$ is the upper bound of $\|\bm{\theta}_{multi}\|_2^2$ and $B_A+B_{\Sigma\mu}$ is the upper bound of $\|\bm{\theta}_{super}\|_2^2$. 
Clearly, $\mathbb{E}[R_{super}(\hat{\bm{\theta}}_{super})-R_{super}(\bm{\theta}_{super}^*)]\leq \mathbb{E}[R_{multi}(\hat{\bm{\theta}}_{multi})-R_{multi}(\bm{\theta}_{multi}^*)]$ when
\begin{eqnarray}
\begin{aligned}
&B_{\Sigma\mu}+D(1+D)B_{\mu}\log(1+\frac{MI}{D(1+D)})\\
&\leq MB_{\mu}+D(M+D)B_{\mu}\log(1+\frac{MI}{D(M+D)}).
\end{aligned}
\end{eqnarray}
This completes the proof.
\end{proof}

Theorem~\ref{thm2} means that under a certain condition learning a superposed Hawkes process can get better convergence of the loss function with fewer samples compared with learning multiple Hawkes processes, which is meaningful to improve the robustness of learning endogenous impact functions. 
Note that this benefit from superposition is only available when the condition in (\ref{cond}) is satisfied. 
Based on Theorem~\ref{thm2}, we can show that the diversity of the exogenous intensities has a large influence on the feasibility of our superposition-based learning strategy. 
In particular, we have the following two lemmas.
\begin{lemma}\label{lm1}
For the Hawkes processes with the same exogenous intensity and endogenous impact functions, the superposition-based learning strategy is inefficient, $i.e.$, $\mathbb{E}[R_{super}(\hat{\bm{\theta}}_{super})-R_{super}(\bm{\theta}_{super}^*)]\geq \mathbb{E}[R_{single}(\hat{\bm{\theta}})-R_{single}(\bm{\theta}^*)]$.
\end{lemma}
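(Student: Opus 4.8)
The plan is to reuse the bound-based comparison from the proof of Theorem~\ref{thm2}, but now to contrast the superposition strategy against single-process learning rather than against multi-task learning. The superposition bound is already available from Theorem~\ref{thm2},
\[
\mathbb{E}[R_{super}(\hat{\bm{\theta}}_{super})-R_{super}(\bm{\theta}_{super}^*)]\leq \mathcal{O}\Bigl(\frac{B_A+B_{\Sigma\mu}+D(1+D)B_{\mu}\log(1+\frac{MI}{D(1+D)})}{MI}\Bigr),
\]
so the first task is to obtain the matching bound for $R_{single}$. I would apply the linear-predictor bound~(\ref{upper}) to $R_{single}$ in~(\ref{sqloss_s}) with dimension $C=D(1+D)$, sample count $MI$, squared parameter-norm bound $B=B_A+B_{\mu}$ (using $\|\bm{\mu}\|_2^2\leq B_{\mu}$ and $\|\mbox{vec}(\bm{A})\|_2^2\leq B_{A}$), and observation range $Y^2=\mathcal{O}(B_{\mu})$ by the same stationarity argument of~\citep{zhu2013ruin} invoked in Theorem~\ref{thm2}. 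This yields
\[
\mathbb{E}[R_{single}(\hat{\bm{\theta}})-R_{single}(\bm{\theta}^*)]\leq \mathcal{O}\Bigl(\frac{B_A+B_{\mu}+D(1+D)B_{\mu}\log(1+\frac{MI}{D(1+D)})}{MI}\Bigr).
\]

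The second, key step specializes to the hypothesis of the lemma. Because all processes share the same exogenous intensity, $\bm{\mu}^m=\bm{\mu}$ for every $m$, the superposed background term is $\sum_{m=1}^{M}\bm{\mu}^m=M\bm{\mu}$, whose squared norm equals $M^2\|\bm{\mu}\|_2^2$; hence the tight value is $B_{\Sigma\mu}=M^2 B_{\mu}$ rather than the generic $B_{\mu}$ appearing in the single-process problem. I would also note that the $\frac{1}{M}$ rescaling in~(\ref{sqloss_super}) is exactly what keeps the dynamic range of the superposed counting process comparable to a single one, so both predictors share the same $Y^2=\mathcal{O}(B_{\mu})$ and therefore the same logarithmic term and the same denominator $MI$.

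The final step is a term-by-term comparison of the two numerators. They agree except that the superposition bound carries $B_{\Sigma\mu}=M^2 B_{\mu}$ where the single-process bound carries $B_{\mu}$. Since $M\geq 1$ we have $M^2 B_{\mu}\geq B_{\mu}$, so the superposition bound is never smaller, giving $\mathbb{E}[R_{super}(\hat{\bm{\theta}}_{super})-R_{super}(\bm{\theta}_{super}^*)]\geq \mathbb{E}[R_{single}(\hat{\bm{\theta}})-R_{single}(\bm{\theta}^*)]$ and establishing the claimed inefficiency.

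The main obstacle is one of interpretation rather than computation: as in Theorem~\ref{thm2}, the inequality is really a statement about the Shamir-style upper bounds, so I would be careful to phrase the conclusion as the superposition bound being no tighter than the single-process bound, and to justify that $B_{\Sigma\mu}=M^2 B_{\mu}$ is the correct tight value under identical intensities, since the inflation by the factor $M^2$ is precisely the source of inefficiency. A secondary point to verify is that the observation-range constant $Y^2$ truly matches across the two strategies; this hinges on the $\frac{1}{M}$ normalization together with the stationarity limit $\lim_{t\to\infty}N_d(t)/t=\mu_d/(1-\|\bm{\Phi}\|)$, which I would check explicitly for both the single and the superposed process so that the two bounds differ only in the flagged term.
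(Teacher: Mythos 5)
Your proposal follows essentially the same route as the paper's own proof: apply the Shamir-type bound (\ref{upper}) to $R_{single}$ with $C=D(1+D)$ and $B=B_A+B_{\mu}$, observe that identical exogenous intensities force $B_{\Sigma\mu}=M^2B_{\mu}$, and conclude from $M^2B_{\mu}\geq B_{\mu}$ that the superposition bound is looser. Your explicit caveat that the inequality is really a comparison of upper bounds (and your check that $Y^2$ matches across the two strategies via the $\frac{1}{M}$ rescaling) is a fair and slightly more careful phrasing of exactly what the paper does implicitly.
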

\begin{proof}
In this case, $\bm{\mu}^1=...=\bm{\mu}^M=\bm{\mu}$ and $\|\sum_{m=1}^{M}\bm{\mu}^m\|_2^2=M^2\|\bm{\mu}\|_2^2\leq M^2B_{\mu}=B_{\Sigma\mu}$. 
Instead of learning multiple Hawkes processes, we only need to learn a single Hawkes process from multiple independent event sequences or from a superposition of them. 
Similar to the proof of Theorem~\ref{thm2}, when we minimize $R_{single}(\bm{\theta})$, its excess risk is bounded as
\begin{eqnarray*}
\begin{aligned}
&\mathbb{E}[R_{single}(\hat{\bm{\theta}})-R_{single}(\bm{\theta}^*)]\\
&\leq \mathcal{O}\Bigl( \frac{B_A+B_{\mu}+D(1+D)B_{\mu}\log(1+\frac{MI}{D(1+D)})}{MI} \Bigr).
\end{aligned}
\end{eqnarray*}
Based on the relationship that $B_{\Sigma\mu}=M^2B_{\mu}$, when we minimize $R_{super}(\bm{\theta}_{super})$, its excess risk is bounded as
\begin{eqnarray*}
\begin{aligned}
&\mathbb{E}[R_{super}(\hat{\bm{\theta}}_{super})-R_{super}(\bm{\theta}_{super}^*)]\\
&\leq \mathcal{O}\Bigl( \frac{B_A+M^2B_{\mu}+D(1+D)B_{\mu}\log(1+\frac{MI}{D(1+D)})}{MI} \Bigr).
\end{aligned}
\end{eqnarray*}
Because $M^2B_{\mu}>B_{\mu}$ for $M>1$, $\mathbb{E}[R_{super}(\hat{\bm{\theta}}_{super})-R_{super}(\bm{\theta}_{super}^*)]\geq \mathbb{E}[R_{single}(\hat{\bm{\theta}})-R_{single}(\bm{\theta}^*)]$. 
\end{proof}

This Lemma reflects the reason why the superposition of point processes is treated as a challenge in a lot of previous work. 
It means that if the event sequences are generated by a single Hawkes process, learning from the superposition of the sequences suffers a higher excess risk. 
What is worse, the more event sequences are superposed, the higher risk we will have in the learning result. 
In such a situation, we need to avoid the superposition of Hawkes processes. 

Fortunately, it is hard to describe real-world data using a single Hawkes process model. 
To suppress the risk of model misspecification, we can use multiple Hawkes processes with shared impact functions and different exogenous intensities to describe the data generated by an endogenously-stationary system with various exogenous fluctuations. 
In this situation, applying the superposition-based learning strategy can be efficient, especially in the following case:
\begin{lemma}\label{lm2}
For the Hawkes processes with complementary exogenous intensities, $i.e.$, $\{HP(\bm{\mu}^m, \bm{\Phi})\}_{m=1}^{M}$ and $\mbox{supp}(\bm{\mu}^m)\cap\mbox{supp}(\bm{\mu}^{m'})=\emptyset$ for all $m\neq m'$, the superposition-based learning strategy always provides us with benefits on efficiency, $i.e.$, $\mathbb{E}[R_{super}(\hat{\bm{\theta}}_{super})-R_{super}(\bm{\theta}_{super}^*)]\leq \mathbb{E}[R_{multi}(\hat{\bm{\theta}}_{multi})-R_{multi}(\bm{\theta}_{multi}^*)]$.
\end{lemma}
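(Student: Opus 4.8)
The plan is to reduce the claim to the single condition~(\ref{cond}) of Theorem~\ref{thm2}. Since the conclusion asserted in Lemma~\ref{lm2} is exactly the excess-risk inequality that Theorem~\ref{thm2} guarantees whenever~(\ref{cond}) holds, it suffices to show that the complementary-support hypothesis forces~(\ref{cond}) to be satisfied. So the entire argument amounts to pinning down the effective value of $B_{\Sigma\mu}$ under disjoint supports and then verifying the resulting inequality.

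First I would compute the bound on $\|\sum_{m=1}^{M}\bm{\mu}^m\|_2^2$. Because $\mbox{supp}(\bm{\mu}^m)\cap\mbox{supp}(\bm{\mu}^{m'})=\emptyset$ for every $m\neq m'$, at each coordinate $d$ at most one of the $\mu_d^m$ is nonzero, so all cross terms in the expansion of the squared norm vanish and $\|\sum_{m=1}^{M}\bm{\mu}^m\|_2^2=\sum_{m=1}^{M}\|\bm{\mu}^m\|_2^2\leq MB_{\mu}$. This is the crucial gain over the coincident-support case of Lemma~\ref{lm1}, where the same quantity equals $M^2B_{\mu}$; here we may instead take $B_{\Sigma\mu}=MB_{\mu}$. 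Substituting this into~(\ref{cond}) cancels the leading $MB_{\mu}$ on both sides, so the condition collapses to $(1+D)\log(1+\frac{MI}{D(1+D)})\leq(M+D)\log(1+\frac{MI}{D(M+D)})$.

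It then remains to verify this residual logarithmic inequality, which I would do by a monotonicity argument. Setting $g(c)=c\log(1+\frac{MI}{Dc})$ for $c\geq 1$, a short differentiation gives $g'(c)=\log(1+u)-\frac{u}{1+u}$ with $u=\frac{MI}{Dc}>0$, and the elementary bound $\log(1+u)\geq\frac{u}{1+u}$ shows $g'(c)\geq 0$, so $g$ is nondecreasing. Since $M+D\geq 1+D$ for $M\geq 1$, we obtain $g(M+D)\geq g(1+D)$, which is precisely the collapsed condition, so~(\ref{cond}) holds and the lemma follows. I expect the main obstacle to be the bookkeeping in the first step rather than the analysis in the last: the key insight is recognizing that disjoint supports drop $B_{\Sigma\mu}$ from the worst-case $M^2B_{\mu}$ of Lemma~\ref{lm1} down to $MB_{\mu}$, which is exactly what places us on the favorable side of~(\ref{cond}); the monotonicity check afterward is routine.
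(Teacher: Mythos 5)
Your proof is correct and follows essentially the same route as the paper: bound $\|\sum_m\bm{\mu}^m\|_2^2$ by $MB_\mu$ using the disjoint supports, substitute into condition~(\ref{cond}), and observe that the remaining logarithmic inequality always holds. The only difference is that you explicitly verify that inequality via the monotonicity of $g(c)=c\log(1+\frac{MI}{Dc})$, a detail the paper simply asserts.
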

\begin{proof}
Here, the $\mbox{supp}(\cdot)$ returns to the set of the indices of nonzero elements. 
Because $\mbox{supp}(\bm{\mu}^m)\cap\mbox{supp}(\bm{\mu}^{m'})=\emptyset$ for all $m\neq m'$, we have $\|\sum_{m=1}^M\bm{\mu}^m\|_2^2 = \sum_{m=1}^M\|\bm{\mu}^m\|_2^2\leq MB_{\mu}=B_{\Sigma\mu}$. 
Plugging the upper bound into the condition (\ref{cond}), we have
\begin{eqnarray*}
\begin{aligned}
MB_{\mu}\leq & MB_{\mu} + D(M+D)B_{\mu}\log\Bigl(1+\frac{MI}{D(M+D)}\Bigr) \\
&- D(1+D)B_{\mu}\log\Bigl(1+\frac{MI}{D(1+D)}\Bigr).
\end{aligned}
\end{eqnarray*}
This inequality always holds because $D(M+D)\log(1+\frac{MI}{D(M+D)})\geq D(1+D)\log(1+\frac{MI}{D(1+D)})$ for $M>1$, $D\geq 1$ and $I\geq 1$.
\end{proof}

\subsection{The cost of the benefits}
It should be noted that when we apply the superposition-based learning strategy, we have to increase the computational complexity to obtain the tighter bound of excess risk. 
In particular, the matrix $\bm{X}$ in $R_{multi}$ contains the accumulation of integral decay kernels corresponding to $M$ event sequences with $I$ events per each.
The computational complexity for getting the $\bm{X}$ is $\mathcal{O}(MI^2)$. 
When we superpose the $M$ event sequences together, we obtain a denser superposed sequence with $MI$ events, so the computational complexity of the matrix $\bm{X}$ in $R_{super}$ is $\mathcal{O}(M^2I^2)$. 

Additionally, after applying the superposition-based learning strategy, we cannot learn the different exogenous intensities simultaneously with the impact functions because they have been accumulated by the superposition operation. 

\begin{figure*}[t]
\centering
\subfigure[Least squares, $D=5$]{
\includegraphics[width=0.24\linewidth]{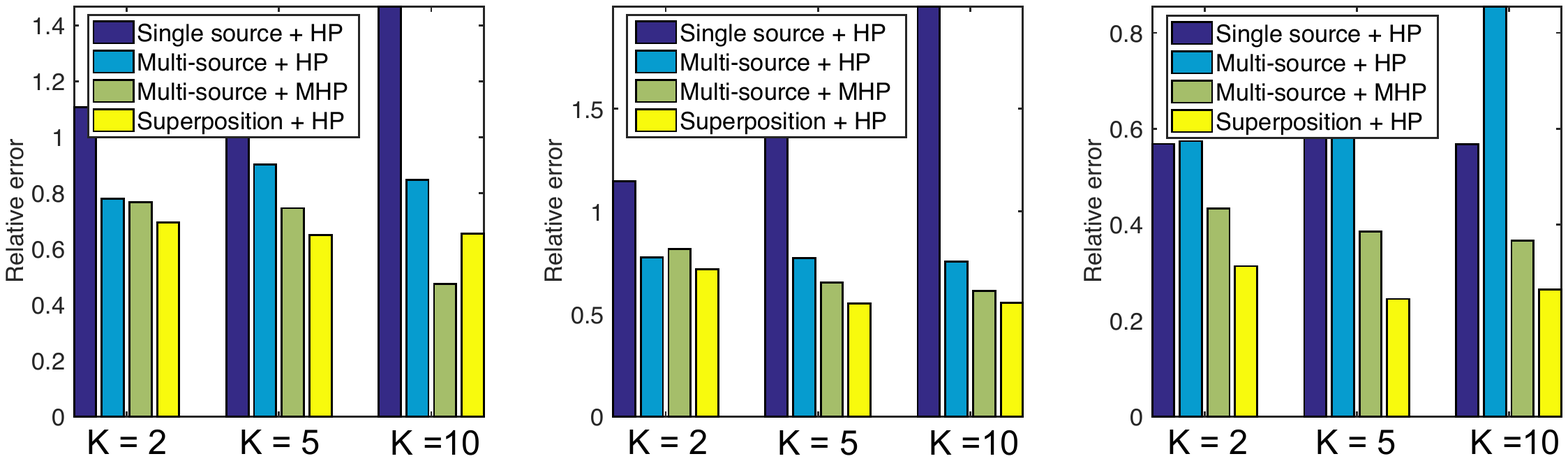}\label{fig:syn_ls1}}
\subfigure[Least squares, $D=10$]{
\includegraphics[width=0.24\linewidth]{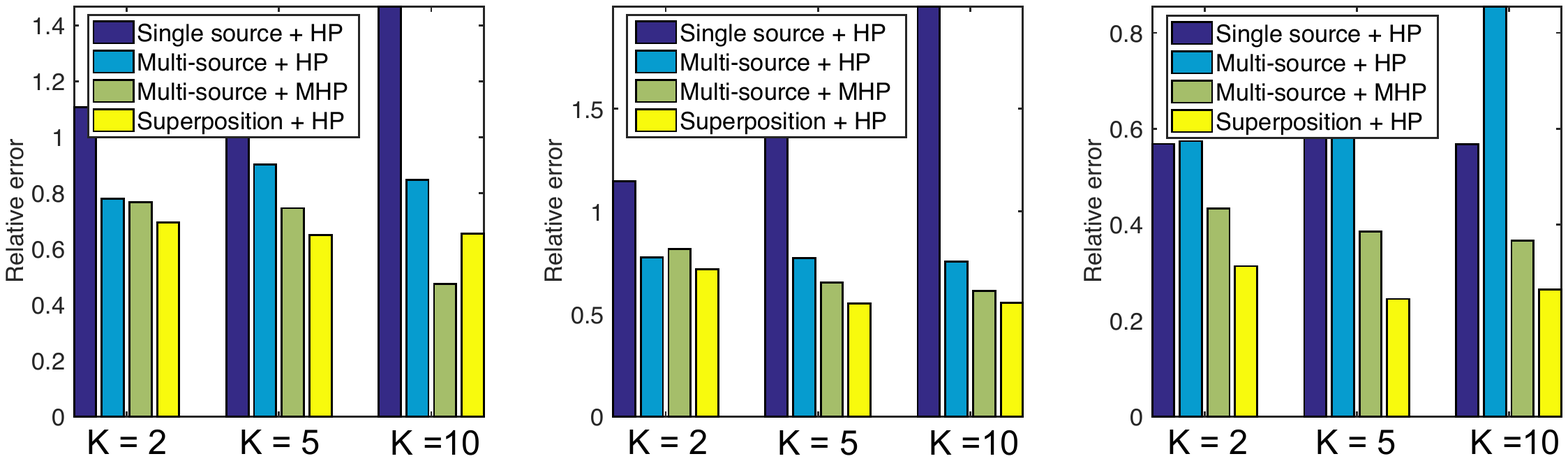}\label{fig:syn_ls2}}
\subfigure[MLE, $D=5$]{
\includegraphics[width=0.24\linewidth]{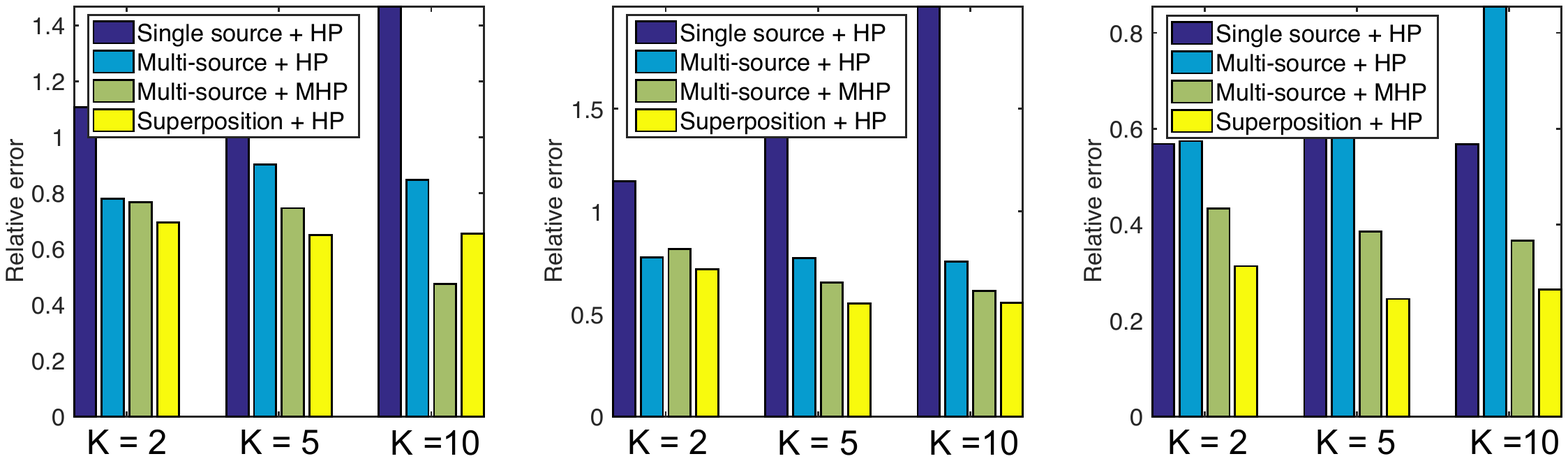}\label{fig:syn_mle1}}
\subfigure[MLE, $D=10$]{
\includegraphics[width=0.24\linewidth]{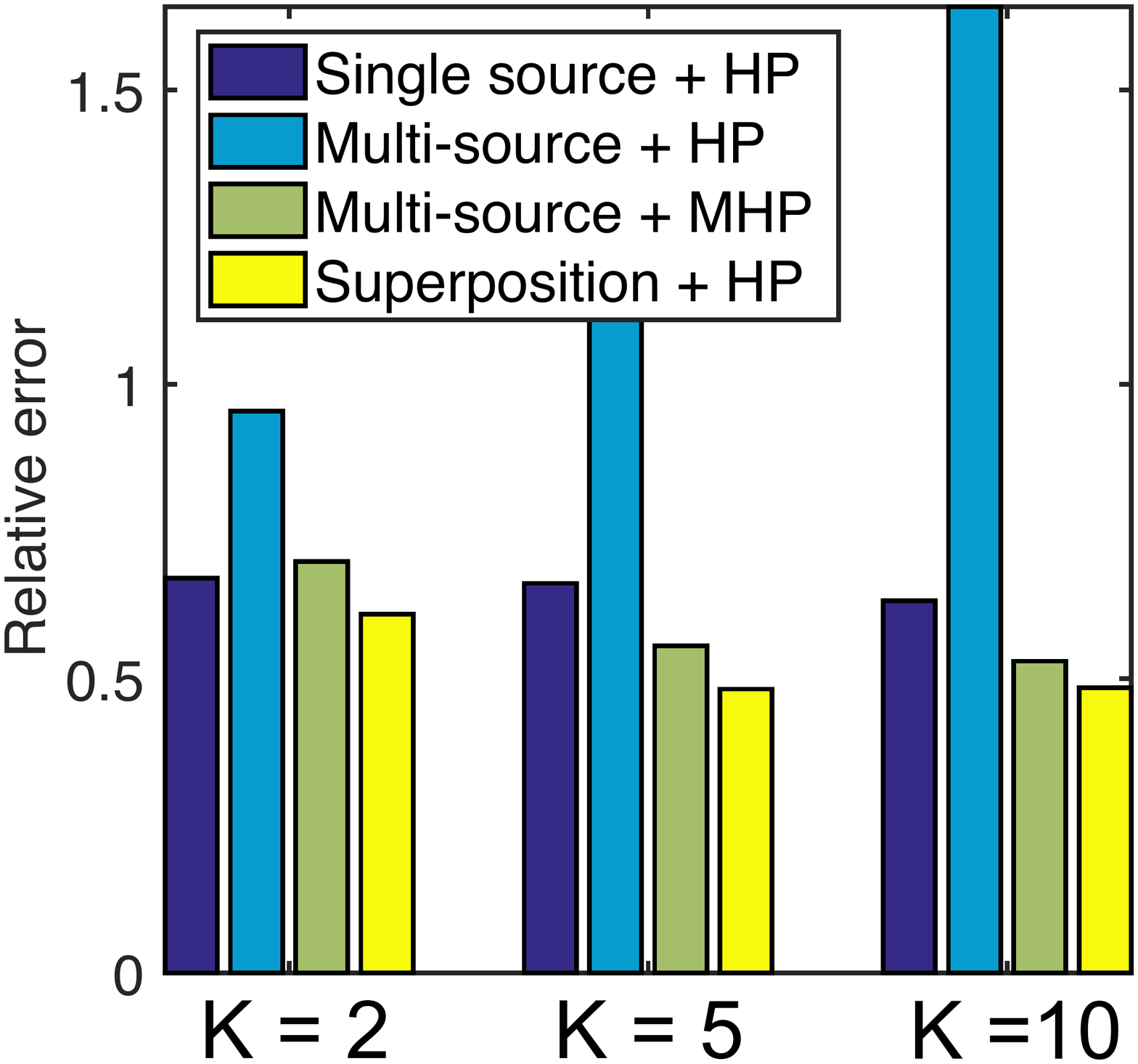}\label{fig:syn_mle2}}
\caption{Estimation errors of impact functions obtained by various methods.}
\end{figure*}

\section{Experiments}
\subsection{Validations based on synthetic data}
To verify the benefits from superposed Hawkes processes, we first test our superposition-based learning strategy on a synthetic data set and compare it with its competitors on learning errors of impact functions. 
The synthetic data set is generated as follows:
Given $K$ $D$-dimensional Hawkes process models, we generate $20$ event sequences for each. 
Each sequence has about $50$ events in the time window $[0,100]$. 
These Hawkes processes share the same impact functions, which are parameterized as an infectivity matrix $\bm{A}\in\mathbb{R}^{D\times D}$ and a predefined decay kernel $\exp(-t)$. 
The matrix $\bm{A}$ is random with spectral norm $\|\bm{A}\|_2=0.5$. 
The exogenous intensity $\bm{\mu}$ of each Hawkes process is a sparse vector, in which only one element is nonzero. 
The location of the nonzero element is randomly selected and the value is uniformly sampled from the interval $[0, 1]$. 
Given the parameters, all the event sequences are simulated by the branching process-based method in~\citep{moller2006approximate}. 
The number of models, $K$, is set to be $2$, $5$ and $10$, respectively. 
The dimension $D$ is set to be $5$ or $10$.

This synthetic data set and its simulation process imitate the behaviors in social networks. 
The users in the network correspond to the dimensions of Hawkes process and their relationships are captured by the infectivity matrix. 
The information sources in the network are different in different situations. 
Each event sequence records the behaviors of the users when one of them releases some information. 
The information releasing behaviors of the source user are modeled by a homogeneous Poisson process. 
The following behaviors of other users are triggered accordingly, which are modeled by inhomogeneous Poisson processes. 
The superposition of all these Poisson processes is a Hawkes process. 
More details can be found in~\citep{moller2006approximate,farajtabar2014shaping}.

We can learn the infectivity matrix based on the following four strategies:

\textbf{1. Single source + HP:} Select the event sequences corresponding to a specific kind of Hawkes process and learn a single Hawkes process.

\textbf{2. Multi-source + HP:} Ignore the diversity of the exogenous intensities and learn a single Hawkes process from all event sequences.

\textbf{3. Multi-source + MHP:} Consider the diversity of the exogenous intensities and learn multiple Hawkes processes from all event sequences.

\textbf{4. Proposed Superposition + HP:} Learn a single Hawkes process from the superposition of the event sequences.

The superposition is achieved as follows: $20$ superposed event sequences are obtained.  
Each of them is the superposition of $K$ event sequences and the $k$-th sequences is one of the $20$ sequences corresponding to the $k$-th Hawkes processes. 

Using the least-squares-based learning method mentioned in Section~\ref{ssec2:ls}, we learn the infectivity matrix based on the strategies above, respectively. 
Figure~\ref{fig:syn_ls1} and Figure~\ref{fig:syn_ls2} compare these strategies on the relative estimation error of the infectivity matrix ($i.e.$, $\frac{\|\widehat{\bm{A}}-\bm{A}^*\|_F}{\|\bm{A}^*\|_F}$) when $D=5$ and $10$. 
The results are the average of $10$ trials. 
We can find that the ``Single source + HP'' is the worst because it only takes advantage of the information of one source. 
Although it may avoid the problem of model misspecification, using much fewer samples makes it suffer from over-fitting and leads to large estimation errors. 
The ``Multi-source + HP'' strategy improves the learning results because it fully uses all event sequences. 
However, the improvement is limited because it ignores the diversity of the exogenous intensities and only learns a misspecified model. 
The ``Multi-source + MHP'' strategy is much better than the previous two strategies because it captures the diversity of the exogenous intensities by introducing more parameters. 
In the case with $K=10$ and $D=5$, it is even better than the proposed ``Superposition + HP'' strategy. 
Finally, the proposed ``Superposition + HP'' strategy achieves the smallest learning errors in most situations, which means that this strategy indeed obtains some benefits from the superposed Hawkes process, as shown in Lemma~\ref{lm2}. 

In our view, the reason for the inferiority of the proposed ``Superposition + HP'' strategy in the case with $K=10$ and $D=5$ is due to the lack of diversity of exogenous intensities. 
For $k=1,...,K$, the exogenous intensity vector of the $k$-th Hawkes process model has only one nonzero element. 
When $D<K$, there are some Hawkes process models having similar exogenous intensity vectors -- the locations of their nonzero elements are the same with each other. 
According to Lemma~\ref{lm1}, the superposition of these similar Hawkes processes does harm to the learning results. 
In other words, this failure case actually verifies our study results. 
When we increase the dimension of the model to $D=10$, we can find in Figure~\ref{fig:syn_ls2} that the performance of the proposed ``Superposition + HP'' strategy is consistently better than its competitors.

Another interesting phenomenon is that when we apply the maximum likelihood estimation (MLE) method~\citep{lewis2011nonparametric,zhou2013learning} to learn the Hawkes processes, the superposition-based strategy also outperforms to other strategies on the estimation error of the infectivity matrix. 
Figure~\ref{fig:syn_mle1} and Figure~\ref{fig:syn_mle2} visualize the comparisons. 
We can find that the proposed ``Superposition + HP'' strategy achieves much better learning results in all cases and the learning results obtained by MLE are better than those obtained by the least squares method. 
Additionally, different from the cases applying least squares method, when applying MLE, the ``Multi-source + HP'' is the worst strategy.
These observations reveal that 1) the MLE method has better sample complexity than the least squares method; 2) compared to over-fitting, the MLE method is more sensitive to the problem of model misspecification. 
The theoretical analysis of the benefits from superposed Hawkes processes in the framework of MLE is an interesting problem, which is left for our future work.

\subsection{Applications to the cold-start problem of recommendation systems}
A potential application of our work is solving the cold-start problem of recommendation systems. 
Specifically, the cold-start of a recommendation system means recommending certain items to the users having extremely few recorded behaviors, which is significant for practical recommendation systems. 
Traditional solutions of the cold-start problem rely on the side information like users' profiles, but they ignore the fact that for the users having few records the side information of them is likely to be limited as well. 
Therefore, how to solve the cold-start problem without side information is an important but challenging problem. 

Recently, Hawkes process-based recommendation systems have been proposed~\citep{du2015time}, which inspires us to solve the cold-start problem from the viewpoint of learning Hawkes processes. 
Generally, for each user her/his buying behavior of an item may trigger her/his following purchases of other items. 
The infectivity between different items is stationary, which is the endogenous nature of the recommendation system. 
The preference of each individual user corresponds to the personalized exogenous fluctuation of the system, and her/his purchases can be formulated as an event sequence, which can be captured by a Hawkes process. 
The event sequences of different users are instances of the Hawkes processes having the same impact functions (and infectivity matrix) but different exogenous intensities. 
If we can learn the infectivity matrix $\bm{A}$ of all $D$ items, we can recommend items for each user at time $t$ according to her/his historical buying behaviors $\mathcal{H}_t$ by finding the item with the highest endogenous intensity:
\begin{eqnarray*}
\begin{aligned}
d_{next}=\arg\max_{d\in\mathcal{D}}\sideset{}{_{(t_i,d_i)\in\mathcal{H}_t}}\sum a_{dd_i}\exp(-w(t-t_i)).
\end{aligned}
\end{eqnarray*}

In the cold-start problem, the event sequences are extremely short, so it is hard to learn a reliable infectivity matrix. 
With the help of the superposition-based learning strategy, we can increase the robustness of learned infectivity matrix and recommend items with higher accuracy. 

\begin{table}[t]
\small
\centering
\caption{Statistics of our data set.}\label{tab1}
\vspace{3pt}
\begin{tabular}{c|c|c|c}
\hline\hline
Category  &\#Users   &\#Items&\#Ratings  \\ \hline
{Baby}&1240&658&3142\\
{Garden}&650&466&1522\\
{Pet}&2128&958&5240\\ \hline\hline
\end{tabular}
\end{table}

\begin{table*}[t]
\small
\centering
\caption{Summary of the performance for various methods.}\label{tab2}
\vspace{3pt}
\setlength{\tabcolsep}{0.7pt}
\begin{tabular}
{c|c|@{\hspace{2pt}}c@{\hspace{4pt}}c@{\hspace{4pt}}c@{\hspace{3pt}}|@{\hspace{3pt}}c@{\hspace{4pt}}c@{\hspace{4pt}}c@{\hspace{3pt}}|@{\hspace{3pt}}c@{\hspace{4pt}}c@{\hspace{4pt}}c@{\hspace{3pt}}|@{\hspace{3pt}}c@{\hspace{4pt}}c@{\hspace{4pt}}c@{\hspace{3pt}}|@{\hspace{3pt}}c@{\hspace{4pt}}c@{\hspace{4pt}}c@{\hspace{1pt}}} 
\hline\hline
\multicolumn{2}{c|@{\hspace{2pt}}}{Method} &\multicolumn{3}{c|@{\hspace{3pt}}}{MostPopular}
&\multicolumn{3}{c|@{\hspace{3pt}}}{BPR} &\multicolumn{3}{c|@{\hspace{3pt}}}{FPMC}
&\multicolumn{3}{c|@{\hspace{3pt}}}{Multi-source+MHP} &\multicolumn{3}{c}{Superposition+HP}\\ \hline
\multicolumn{2}{c|@{\hspace{2pt}}}{Metric} &$P@N$ &$R@N$ &$F_1@N$
&$P@N$ &$R@N$ &$F_1@N$ &$P@N$ &$R@N$ &$F_1@N$
&$P@N$ &$R@N$ &$F_1@N$ &$P@N$ &$R@N$ &$F_1@N$\\ \hline
\multirow{3}{*}{Top5} &Baby &0.145&0.726&0.242 
                            &0.306&1.532&0.511
                            &\textbf{0.484}&\textbf{2.419}&\textbf{0.806}
                            &0.339&1.694&0.565
                            &0.306&1.532&0.511\\
                    &Garden &0.277&1.385&0.462
                            &0.646&3.231&1.077
                            &0.277&1.385&0.462
                            &0.739&3.692&1.231
                            &\textbf{1.046}&\textbf{5.231}&\textbf{1.744}\\
                    &Pet    &0.517&2.585&0.862    
                            &0.526&2.632&0.877
                            &0.517&2.585&0.862
                            &0.780&3.900&1.300
                            &\textbf{0.864}&\textbf{4.323}&\textbf{1.441}\\ \hline
\multirow{3}{*}{Top10}&Baby &0.234&2.339&0.425
                            &\textbf{0.379}&\textbf{3.790}&\textbf{0.689}
                            &0.307&3.065&0.557
                            &0.218&2.177&0.396
                            &0.282&2.822&0.513\\ 
                    &Garden &0.246&2.462&0.448
                            &0.431&4.308&0.783
                            &0.308&3.077&0.559
                            &0.646&6.461&1.174
                            &\textbf{0.800}&\textbf{8.000}&\textbf{1.454}\\
                    &Pet    &0.371&3.712&0.675
                            &0.428&4.276&0.778
                            &0.470&4.700&0.854 
                            &0.549&5.498&1.000    
                            &\textbf{0.630}&\textbf{6.297}&\textbf{1.145}\\ \hline
\multirow{3}{*}{Top20}&Baby &0.335&6.694&0.638                            
                            &0.294&5.887&0.561
                            &\textbf{0.339}&\textbf{6.774}&\textbf{0.645}
                            &0.194&3.871&0.369    
                            &0.254&5.081&0.484\\
                    &Garden &0.369&7.385&0.703
                            &0.431&8.615&0.821
                            &0.300&6.000&0.571
                            &0.439&8.769&0.835    
                            &\textbf{0.508}&\textbf{10.154}&\textbf{0.967}\\
                    &Pet    &0.374&7.472&0.712
                            &0.465&9.305&0.886
                            &0.371&7.425&0.707
                            &0.338&6.767&0.645    
                            &\textbf{0.489}&\textbf{9.774}&\textbf{0.931}\\ \hline\hline
\end{tabular}\label{tab:result}
\end{table*}

The training and testing data used in this work are from the Amazon product data set (APD) provided by~\citep{he2016ups}. 
The APD contains millions of buying-and-rating behaviors to the items grouped into $24$ categories. 
For each categories, millions of user-item pairs spanning from May 1996 to July 2014 and their time stamps are recorded. 
Focusing on the cold-start problem, we select the users having extremely fewer purchases to recommend items. 
Specifically, we preprocess the buying-and-rating behaviors of three categories ($i.e.$, ``Baby'', ``Patio, Lawn and Garden'', and ``Pet Supplies'') as follows. 
For the items having more than $40$ rating behaviors, we select their users satisfying three conditions: 1) the number of  behaviors of the users spanning from January 2014 to April 2014 is no more than $3$; 2) the scores they gave to these items are $4$ or $5$; 3) they bought and rated at least one item from April 2014 to July 2014. 
After the preprocessing above, we obtain a subset of the APD to train and test different methods in the cold-start situation. 
The statistics of our data set is given in Table~\ref{tab1}. 
According to users' buying-and-rating behaviors from January 2014 to April 2014, we aim to predict (recommend) items for them. 
Because during this period only one or two buying behaviors happened, this is a typical cold-start problem.

To demonstrate the usefulness of our strategy, we compare the ``Superposition + HP'' strategy with its most powerful competitor ``Multi-source + MHP''. 
Additionally, we consider three popular recommendation methods, including recommending the most popular item to all users (MostPopular), the Bayesian personalized ranking (BPR) in~\citep{rendle2009bpr} and the factorization of personalized Markov chains (FPMC) in~\citep{rendle2010factorizing}. 

We evaluate the recommendation results achieved by various methods and analyze their performance in the cold-start situation. 
For each method, we define the generated recommendation list for user $m$, $m=1,...,M$, as $\bm{r}^m = \{d_1^m, d_2^m, \cdots, d_N^M\}$, where $N$ is the number of recommended items, $d_i^m\in\mathcal{D}$ is ranked at the $i$-th position in $\bm{r}^m$. 
Suppose the real set of the items that user $m$ will buy is $\bm{t}^m$, we thus use the top-$N$ precision ($P@N$), recall ($R@N$) and $F_{1}$-score ($F_1@N$) as the measurements, which are defined as: 
\begin{eqnarray*}
\begin{aligned}
P@N  &= \frac{1}{M}\sum_{m} P_m@N =\frac{1}{M}\sum_{m} \frac{|\bm{r}^m \cap \bm{t}^m|}{|\bm{r}^m|}\times 100\%\\
R@N &= \frac{1}{M}\sum_{m} R_m@N  =\frac{1}{M}\sum_{m} \frac{|\bm{r}^m \cap \bm{t}^m|}{|\bm{t}^m|}\times 100\%\\
F_{1}@N &= \frac{1}{M}\sum_{m} F_{1m}@N  =\frac{1}{M}\sum_{m} \frac{2\cdot P_m@N \cdot R_m@N}{P_m@N+R_m@N}
\end{aligned}
\end{eqnarray*}
In this experiment, we set $N=5$, $10$ and $20$, respectively.

Table~\ref{tab2} summarizes the performance of various methods on the three categories listed in Table~\ref{tab1}. 
For the categories ``Garden'' and ``Pet'', applying the proposed ``Superposition + HP'' startegy improves the performance of recommendation systems in the cold-start situation. 
The gains obtained by our method on the three measurements are more than $10\%$ consistently compared with other methods. 
These results verify the potential of our method to solve the cold-start problem of recommendation systems. 

However, for the category ``Baby'', we can find that our method is inferior to the BPR or the FPMC method. 
According to our analysis, a possible reason for this phenomenon is that the buying-and-rating behaviors for the items of ``Baby'' category may not obey the Hawkes process model. 
Evidence supporting this explanation is that both ``Multi-source + MHP'' and ``Superposition + HP'' obtain unsatisfying recommendation results for this category. 
It should be noted that even if the results provided by the superposition-based learning strategy are not optimal, it still outperforms the ``Multi-source + MHP'' strategy, which means that the benefits from superposed Hawkes process for learning infectivity matrix are still available.

\textcolor{black}{The synthetic and real-world experiments mentioned above are implemented based on our MATLAB Hawkes process toolbox ``THAP''~\citep{xu2017thap}, which can be found at \url{https://github.com/HongtengXu/Hawkes-Process-Toolkit}. 
In particular, for each real-world data set involving about one thousand users and hundreds of items, the runtime of our method is less than $1$ minute and without any acceleration.}

\section{Conclusions and Future Work}
We have studied the properties of superposed Hawkes processes and have explored the potential benefits provided by the superposition operation for learning Hawkes process models. 
We demonstrate that with the help of superposition we can estimate the impact functions (or infectivity matrix) of the target Hawkes processes with lower excess risk in the framework of least squares-based learning. 
The typical feasible and infeasible conditions are given as well. 
We verify our theoretical results on synthetic data and show the potential of superposition-based learning strategy to solve the cold-start problem of recommendation systems. 

The experimental results in Figure~\ref{fig:syn_mle1} and Figure~\ref{fig:syn_mle2} imply that the benefits from superposed Hawkes processes are also available in the framework of maximum likelihood estimation. 
In the future, we plan to analyze the influence of superposition on the maximum likelihood estimation of Hawkes processes in theory. 
Additionally, studying the superposition of other kinds of point process models is also our future work.

\section*{Acknowledgments}
\textcolor{black}{
This work was supported in part by DARPA, DOE, NIH, NSF and ONR.
We also thank reviewers for their insightful and helpful comments.
}

\small
\bibliographystyle{icml2017}
\bibliography{example_paper}

\end{document}